\newcommand{\calI}{\mathcal{I}}
\newcommand{\calL}{\mathcal{L}}
\newcommand{\calS}{\mathcal{S}}
\newcommand{\calN}{\mathcal{N}}
\newcommand{\calT}{\mathcal{T}}
\newcommand{\dr}{{\,\mathrm{dr}}}
\newcommand{\dOmega}{{\,\mathrm{d}\Omega}}
\newcommand{\iid}{{\mbox{i.i.d.}\,}}
\newcommand{\trace}{\mathrm{Trace}}
\newcommand{\eps}{\varepsilon}
\newcommand{\indep}{\perp\!\!\!\perp}
\newcommand{\Cov}[1]{\mbox{Cov}\left(#1\right)}
\newcommand{\Exp}[1]{\E\left\{ #1 \right\}}
\newcommand{\Exparg}[2]{\E_{#1}\left\{ #2 \right\}}
\newcommand{\Var}[1]{\mbox{Var}\left\{ #1 \right\}}
\newcommand{\E}{{\mathbb{E}}}
\newcommand{\p}{{\partial}}
\newcommand{\qbox}[1]{\quad\mbox{#1}\quad}
\DeclareMathOperator{\Real}{Re} 
\newcommand{\dsphere}{{\mathbb{S}^{d-1}}}
\newcommand{\Nat}{\mathbb{N}}
\newcommand{\Rd}{{{\mathbb{R}}^d}}
\newcommand{\D}{{\mathcal{D}^2}}
\newcommand{\ES}{{\mathcal{ES}}}
\newcommand{\diff}{{\mathrm{d}}}
\newcommand{\dW}{{\mathrm{d}W}}
\newcommand{\domega}{{\,\mathrm{d}\omega}}
\newcommand{\Htilde}{{\tilde{H}}}
\newcommand{\dponesphere}{{\mathbb{S}^{d}}}
\newcommand{\Vol}{{\mathrm{Vol}}}
\newcommand{\Sbar}{{\bar{S}}}
\newcommand{\Wbar}{{\bar{W}}}
\newcommand{\Xbar}{{\bar{X}}}
\newcommand{\Ybar}{{\bar{Y}}}
\newtheorem{theorem}{Theorem}[section]
\newtheorem*{theorem*}{Theorem}
\newtheorem{proposition}[theorem]{Proposition}
\newtheorem{lemma}[theorem]{Lemma} 
\newtheorem{corollary}[theorem]{Corollary}
\theoremstyle{definition}
\newtheorem*{definition*}{Definition}
\newtheorem{assumptions}{Assumptions}[section]
\theoremstyle{remark}
\newtheorem*{remark*}{Remark}
\newtheorem*{claim*}{Claim}
\begin{document}
\title{Moment Expansions of the Energy Distance}
\author{
    Ian Langmore \\
    \small Gridmatic Inc.
    \small \texttt{ianlangmore@gmail.com}
}
\date{}
\maketitle
\abstract{
    The energy distance is used to test distributional equality, and as a loss function in machine learning.
    While $\D(X, Y)=0$ only when $X\sim Y$, the sensitivity to different moments is of practical importance.
    This work considers $\D(X, Y)$ in the case where the distributions are close.
    In this regime, $\D(X, Y)$ is more sensitive to differences in the means $\Xbar-\Ybar$, than differences in the covariances $\Delta$. This is due to the structure of the energy distance and is independent of dimension. The sensitivity to on versus off diagonal components of $\Delta$ is examined when $X$ and $Y$ are close to isotropic. Here a dimension dependent averaging occurs and, in many cases, off diagonal correlations contribute significantly less. Numerical results verify these relationships hold even when distributional assumptions are not strictly met.
}

\vspace{0.5cm}\noindent
\textbf{Keywords:} Energy distance, Distribution distances, Kernel two sample tests, Kernel distance, Maximum mean discrepancy, Proper scoring rules

\medskip
\section{Introduction}

We are motivated by a forecaster tuning forecast $X_t$, hopefully distributed the same as ground truth $Y_t$.
Only one sample $y_t$ is available at a time, so many distributional distances, such as Wasserstein, are impossible to compute. The forecaster may however compute the \emph{energy score}~\cite{Gneiting2007-ms},
\begin{align*}
    \ES(X_t, y_t) &= \Exparg{X}{\|X_t - y_t\|} - \frac{1}{2}\Exparg{X,X'}{\|X_t - X_t'\|}.
\end{align*}
Above, $X_t, X_t'$ are~\iid ~In practice, the expectation is approximated with the $N$ sample estimate,
\begin{align*}
    \widehat{\ES}(t) :&= \frac{1}{N}\sum_{n=1}^N \left\{ \|X_t^{(n)} - y_t\| \right\}
    - \frac{1}{2N(N-1)}\sum_{\substack{n,m=1\\ n\neq m}}^N \left\{ \|X_t^{(n)} - X_t^{(m)}\| \right\}.
\end{align*}
By averaging over times $t\in\calT$ we have a goodness of fit or loss function
\begin{align}
    \label{align:average-energy-score}
    \frac{1}{|\calT|} \sum_{t\in T} \widehat{\ES}(X_t, y_t).
\end{align}
The forecast minimizing~\eqref{align:average-energy-score} can be found with stochastic gradient descent~\cite{Bottou2018-xm}. This strategy has proven successful in machine learning for weather forecasts.\cite{Pacchiardi2024-dg,Kochkov2024-we}.

We drop the time dependence in order to focus on moments. The expected, time independent energy score is
\begin{align*}
    \Exparg{X, Y}{\|X - Y\|} - \frac{1}{2}\Exparg{X,X'}{\|X - X'\|}.
\end{align*}
This is a strictly proper scoring rule~\cite{Gneiting2007-ms}, and is minimized just when $X$ is distributed the same as $Y$ (written $X\sim Y$).

The minimizing forecast does not change if we subtract $(1/2)\E\|Y - Y'\|$. Doing this gives us the (squared) \emph{energy distance}~\cite{Szekely2013-qh}
\begin{align}
    \D(X, Y)
    :&= \E\|X - Y\| - \frac{1}{2}\E\|X - X'\| - \frac{1}{2}\E\|Y - Y'\|.
    \label{align:energy-distance-kernel-form}
\end{align}
One can show that, with
\begin{align*}
  F(\omega) :&= \Exp{e^{i\omega\cdot X}},\qbox{and} G(\omega) := \Exp{e^{i\omega\cdot Y}},
\end{align*}
\begin{align}
  \label{align:energy-dist-fourier}
  \D(X, Y) &= \frac{1}{c_d} \int\frac{|F(\omega) - G(\omega)|^2}{\|\omega\|^{d+1}}\domega,
\end{align}
where
\begin{align*}
  c_d :&= \Vol\left( \dponesphere \right)
  = \frac{2\pi^{(d+1)/2}}{\Gamma\left( \frac{d+1}{2} \right)}
\end{align*}
Since characteristic functions uniquely define distributions, this shows $\D(X, Y) = 0$ if and only if $X$ is equal in distribution to $Y$, written $X\sim Y$. Rigorous conditions on the kernel under which this occurs have been studied by~\cite{Sejdinovic2013-ho}.
$\D$ appears to have been first introduced by~\cite{Szekely2003-zy,Szekely2013-qh} as an extension of the \emph{Cram\'er-von Mises distance}. It appears to have been independently discovered by~\cite{Baringhaus2004-zq}, who provides an alternate proof.

A practical consideration for learning is, what trade-offs will the minimizing forecast make, in light of the fact that $X\sim Y$ generally cannot be achieved? We consider the case where moments of $X - Y$ are much smaller than the average scale $\lambda$ of $X$ and $Y$. Proposition~\ref{proposition:leading-moments} will show that, roughly speaking,
\begin{align}
  \label{align:approx-expansion-for-intro}
  \D(X, Y) &\approx C_1\frac{m_1(\mu)}{\lambda} + C_2\frac{m_2(\Delta)}{\lambda^3},
\end{align}
where $m_i$ are functionals of the difference of means $\mu:=\E X - \E Y$ and difference of covariances $\Delta := \Cov{X} - \Cov{Y}$. Since an optimization will tend to balance the terms in \eqref{align:approx-expansion-for-intro}, we expect a trained model to have $m_1\sim m_2/\lambda\ll m_2$. In other words, using the energy distance as a loss leads to matching the mean more precisely than the second moment.
The most explicit expansion comes in the nearly spherical case, where $m_1(\mu) = \|\mu\|^2$ and $m_2(\Delta) = 2\|\Delta\|_F^2 + \trace(\Delta)^2$, plus a term depending on skew. This is demonstrated numerically for normal and non-normal distributions.

Similar work has considered asymptotic (in sample size) expansions of kernel distances.
In~\cite{Gretton2012-fp} the maximum distance between $X$ and $Y$ detectable with $t$ samples is shown to be proportional to $t^{-1/2}$. The recent work of \cite{Yan2021-yh} obtains an expansion of the test statistic, which reveals which moments are detectable given a sample size. This is done assuming $X$ and $Y$ are affine transformations of~\iid~variables. This is in some sense more comprehensive than our work, but this comes at a cost of complexity of presentation and the affine assumption.

Section~\ref{section:m-dependent-sequences} demonstrates \eqref{align:approx-expansion-for-intro} in a simplified setting. The moment expansion is presented, discussed, and proven, in Section~\ref{section:leading-moments}. Numerical verification is done in Section~\ref{section:numerical-verification}.

\section{Intuition from M-dependent sequences}
\label{section:m-dependent-sequences}
In this simplified setting of limited correlation length, we will see that off diagonal correlation differences $\Delta_{mn}$ for $n\neq m$, appear in a corrector $O(1/d)$ times smaller than the leading term in $\D$. This leading term involves only the first two moments of the marginals, with the difference of means $\mu$ dominating the difference of variance.

Suppose $\left\{ X_n \right\}$ and $\left\{ Y_n \right\}$ are stationary second order sequences with
$X_n\indep Y_\ell$ for all $n,\ell$.
As always, we let $\left\{ X'_n \right\}$ and $\left\{ Y'_n \right\}$ be sequences independent from, but identical in distribution to, $\left\{ X_n \right\}$, $\left\{ Y_n \right\}$.
Let $\D_d$ be the squared energy distance formed from vectors $(X_1,\ldots, X_d)$, $(Y_1,\ldots,Y_d)$. That is,
\begin{align*}
  \D_d &=
  \Exp{\sqrt{\sum_{n=1}^d(X_n-Y_n)^2}}
  -\frac{1}{2}\Exp{\sqrt{\sum_{n=1}^d(X_n-X'_n)^2}}
  -\frac{1}{2}\Exp{\sqrt{\sum_{n=1}^d(Y_n-Y'_n)^2}}.
\end{align*}

The sequences are M-dependent if there exists $M\in\Nat$ such that $X_n \indep X_{n+k}$ for all $|k|> M$. In this case, with
\begin{align*}
  \gamma^2 :&= \sum_{k=-M}^M \Cov{X_n-Y_n,X_{n+k} - Y_{n+k}},
  \qbox{and}
  \nu^2 := \Exp{(X_1 - Y_1)^2},
\end{align*}
we have a central limit result \cite{Hoeffding1948-kp}
\begin{align*}
  \frac{1}{\sqrt{d}}\left[ \sum_{n=1}^d (X_n-Y_n)^2 - \nu^2 \right]
  \to \calN(0, \gamma^2/d).
\end{align*}
We are therefore justified approximating (via.~the delta method)
\begin{align*}
  \frac{1}{\sqrt{d}}\sqrt{\sum_{n=1}^d (X_n-Y_n)^2}
  &\approx \sqrt{\nu^2 + \eps}
  = \nu \left[ 1 + \frac{\eps}{2\nu} - \frac{\eps^2}{8\nu^2} + O\left( \frac{\eps^3}{\nu^3} \right) \right],
\end{align*}
where $\eps\sim\calN(0, \gamma^2/d)$.

The leading term $\nu$ only involves the marginals. 
Since $\E\eps=0$, the corrector to the leading term is $\nu\E\eps^2/(8\nu^2) = \gamma^2/(8d\nu)$. The corrector does involve off diagonal correlations, but is a factor $1/d$ smaller than the leading term.

Analysis of the leading terms from all three contributions to $\D_d$ can be done assuming the $X_n$ are perturbations of $Y_n$. That is,
\begin{align*}
  \Exp{X_1-Y_1} &= \mu_1
  \qbox{and}
  \Var{Y_1} = \lambda^2 - \delta^2/2,\quad \Var{X_1} = \lambda^2 + \delta^2/2,
\end{align*}
with $\mu_1^2 + \delta^2 \ll \lambda^2$.
In this case, up to a $O(\gamma^2/d)$ perturbation
\begin{align}
  \label{align:M-dependent-asymptotic-result}
  \begin{split}
    \frac{1}{\sqrt{d}}\D_d &\approx
    \sqrt{\Exp{(X_1-Y_1)^2}}
    -\frac{1}{2}\sqrt{\Exp{(X_1-X'_1)^2}}
    -\frac{1}{2}\sqrt{\Exp{(Y_1-Y'_1)^2}} \\
    &=
    \sqrt{\mu_1^2 + 2\lambda^2}
    -\frac{1}{\sqrt{2}}\sqrt{(\lambda^2 + \delta^2/2)}
    -\frac{1}{\sqrt{2}}\sqrt{(\lambda^2 - \delta^2/2)} \\
    &= \frac{1}{2^{3/2}}
    \left[ 
    \frac{\mu_1^2}{\lambda}
    - \frac{\mu_1^4}{8\lambda^3}
    + \frac{\delta^4}{8\lambda^3}
    + O\left(\frac{\mu_1^6 + \delta^6}{\lambda^5} \right)
    \right].
  \end{split}
\end{align}

The form of the score ensures that the difference of means arising in $X - Y$ have no opportunity to cancel with terms coming from $X - X'$ or $Y - Y'$ (each of which is independent of the mean). Thus, so long as we are in the perturbative regime (large $\lambda$), the influence of mean perturbations (here $\mu_1$) on the score will dominate the scale perturbations (here $\delta$), irrespective of dimension. On the other hand, the relative influence of on versus off diagonal covariance terms was determined by averaging, and so depends on dimension.

\section{Leading moments of the Fourier representation}
\label{section:leading-moments}

\subsection{Statement of results}
\label{section:leading-moments-results}
We begin by writing the characteristic functions $F$, $G$ as
\begin{align*}
  F &= e^U,\qbox{and} G=e^V.
\end{align*}
Next, re-write the Fourier representation of $\D(X,Y)$~\eqref{align:energy-dist-fourier} in terms of sums and differences of $U$ and $V$. That is, with
\begin{align*}
      \Htilde(\omega) :&=
      \left|
      \exp\left\{ \frac{U(\omega) + V(\omega)}{2} \right\}
      \right|^2,
      \qbox{and}
      W := U - V,
\end{align*}
the squared energy distance is
\begin{align}
  \begin{split}
    \D(X,Y) &= \frac{1}{c_d} \int 
    \frac{\Htilde(\omega)}{\|\omega\|^{d+1}}
     \left|
     e^{W(\omega)/2} - e^{-W(\omega)/2}
     \right|^2
     \domega.
  \end{split}
  \label{align:energy-dist-cumulants}
\end{align}
Our fundamental assumptions will be on the decay of $\Htilde$, which roughly corresponds to relative size of $X$, $Y$ versus the difference of distributions. Then, we will use cumulant identities
\begin{align}
  \label{align:cumulant-identities}
  \begin{split}
    \frac{\p}{\p\omega_n} W\bigg\vert_{\omega=0} &= i\mu_n := i\Exp{(X_n - Y_n)}, \\
    \frac{\p^2}{\p\omega_n\p\omega_m} W\bigg\vert_{\omega=0} &= -\Delta_{mn} := -\left[ \Cov{X_n,X_m} - \Cov{Y_n,Y_m} \right], \\
    \frac{\p^3}{\p\omega_n\p\omega_m\p\omega_\ell} W\bigg\vert_{\omega=0}
    &= -i\kappa_{mn\ell} := -i\,
    \bigg[
    \Exp{(X_m-\Xbar_m) (X_n-\Xbar_n) (X_\ell - \Xbar_\ell)} \\
    &\qquad\qquad\qquad\qquad
      -\Exp{(Y_m-\Ybar_m) (Y_n-\Ybar_n) (Y_\ell - \Ybar_\ell)}
    \bigg],
  \end{split}
\end{align}
where $\Xbar := \Exp{X}$, $\Ybar := \Exp{Y}$,
then write the Taylor series of the exponentiated $W$ in terms of moments and a decay parameter $\lambda$.

In what follows, $A \lesssim B$ means there exists $C > 0$, independent of $\lambda$, such that $A \leq C B$.
\begin{assumptions}[Assumptions for leading moments]\mbox{}\label{assumptions:leading-moments}
  \begin{enumerate}[label = (\roman*)]
    \item The dimension $d>1$
    \item Rapid decay: $\Htilde(\omega) = H(\lambda\omega)$ for $\lambda> 0$.
    \item $C^5$ bounds in the $r-$ball: Define the $C^5_r$ norm using multi-index notation as
      \begin{align*}
        \|f\|_{C^5_r} :&= \sup_{\|\omega\|\leq r} \sum_{|\alpha|\leq 5} |\p^\alpha f(\omega)|.
      \end{align*}
      Then we assume
      \begin{align*}
        \| e^{W} \|_{C^5_r} + \|e^{-W}\|_{C^5_r}&\leq B(r),
      \end{align*}
      where, with $\dOmega$ the measure on $\dsphere$,
      \begin{align*}
        \int \frac{H(\lambda \omega)}{\|\omega\|^{d+1}}\|\omega\|^5 B(\|\omega\|)\domega
        &= \frac{1}{\lambda^{4}}\int_{\dsphere}\int_0^\infty H(r\theta)r^{5-2} B(r/\lambda)\dr\dOmega(\theta) \\
        &\lesssim \frac{1}{\lambda^{4-b}},
      \end{align*}
      for some $b\in(0, 1)$.
    \item (Optional) Spherical symmetry: $H(\omega) =  h(\|\omega\|)$
  \end{enumerate}
\end{assumptions}

\begin{proposition}[Taylor expansion]\mbox{}\\
  Given (i)-(iii) of assumptions \ref{assumptions:leading-moments},
  \begin{align*}
    \D(X, Y)
    &=\frac{1}{\lambda} \frac{1}{c_d}
    \int_0^\infty \int_{\dsphere}H(r\theta)
    (\theta\cdot\mu)^2
    \dOmega(\theta)\dr
    \\
    &+\frac{1}{\lambda^3}\frac{1}{12 c_d}
    \int_0^\infty r^2\int_{\dsphere}H(r\theta)
    \left[
    3(\theta\cdot\Delta\theta)^2
    - (\theta\cdot\mu)^4
    - 4
    (\theta\cdot\mu)
     \sum_{ijk=1}^d \kappa_{ijk}\theta_i\theta_j\theta_k
    \right]
    \dOmega(\theta)\dr
    \\
    &+ R(\lambda), \\
    |R(\lambda)| &\lesssim \frac{1}{\lambda^{4-b}}.
  \end{align*}

Under spherical symmetry, assumption (iv), we have a more explicit form
  \begin{align*}
    \D(X, Y)
    &=\frac{1}{\lambda} \frac{\Vol\left( \dsphere \right)}{c_d d} \|\mu\|^2 \int_0^\infty h(r)\dr
    \\
    &+ \frac{1}{\lambda^3}\frac{\Vol\left( \dsphere \right)}{4 c_d d(d+2)}
    \left[
      2\|\Delta\|_F^2 + \trace\left( \Delta \right)^2 - \|\mu\|^4
      - 4\beta\cdot\mu
    \right]\int_0^\infty h(r)r^2\dr
    \\
    &+ R(\lambda).
  \end{align*}
  where
  \begin{align*}
    \beta\cdot\mu :&= \sum_{i,j=1}^d\kappa_{iij}\mu_j
    = \sum_{i,j=1}^d \Exp{(X_i - \Xbar_i)^2(X_j-\Xbar_j) - (Y_i-\Ybar_i)^2(Y_j-\Ybar_j)}
    \left( \Xbar_j - \Ybar_j \right)
  \end{align*}
  \label{proposition:leading-moments}
  measures the alignment between difference of skewness and difference of the mean, since $\Exp{(X_i-\Xbar_i)^2(X_j-\Xbar_j)}$ is zero if $X$ is symmetric about its mean.
\end{proposition}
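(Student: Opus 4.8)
The plan is to start from the representation \eqref{align:energy-dist-cumulants} and expand the integrand pointwise in $\omega$ near the origin, then integrate, controlling the tail and the Taylor remainder uniformly using Assumption \ref{assumptions:leading-moments}(iii). First I would write $|e^{W/2}-e^{-W/2}|^2 = 2\Real\bigl(1 - e^{W}\bigr)\cdot(\text{something})$ — more precisely, expand $g(\omega) := |e^{W(\omega)/2} - e^{-W(\omega)/2}|^2$. Using $W(0)=0$ and the cumulant identities \eqref{align:cumulant-identities}, a Taylor expansion of $W$ about $\omega=0$ gives $W(\omega) = i(\omega\cdot\mu) - \tfrac12(\omega\cdot\Delta\omega) - \tfrac{i}{6}\sum\kappa_{ijk}\omega_i\omega_j\omega_k + O(|\omega|^4)$. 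Substituting into $g = e^{W} + e^{-W} - 2 = 2\bigl(\cosh W - 1\bigr) = W^2 + \tfrac{1}{12}W^4 + \cdots$, and collecting terms by homogeneous degree in $\omega$: degree $2$ gives $-(\omega\cdot\mu)^2$; degree $3$ gives a term that integrates to zero over the sphere by parity (odd in $\omega$); degree $4$ gives $(\omega\cdot\Delta\omega)^2/4 - \tfrac13(\omega\cdot\mu)\sum\kappa_{ijk}\omega_i\omega_j\omega_k + \tfrac{1}{12}(\omega\cdot\mu)^4$ — I would track signs carefully here, since $W^2$ has a cross term between the $i(\omega\cdot\mu)$ and the $-\tfrac{i}{6}\kappa$ pieces, and $W^4 \approx (i\omega\cdot\mu)^4 = (\omega\cdot\mu)^4$. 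The claimed bracket $3(\theta\cdot\Delta\theta)^2 - (\theta\cdot\mu)^4 - 4(\theta\cdot\mu)\sum\kappa_{ijk}\theta_i\theta_j\theta_k$ with prefactor $1/(12c_d)$ should match after multiplying through by $12$; the sign of the $(\theta\cdot\mu)^4$ term (negative) is the delicate bookkeeping point.

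Next I would substitute $\Htilde(\omega) = H(\lambda\omega)$ and pass to polar coordinates $\omega = r\theta$, $\theta\in\dsphere$, so that $\domega = r^{d-1}\dr\,\dOmega(\theta)$ and the $\|\omega\|^{-(d+1)}$ weight becomes $r^{-(d+1)}$. The degree-$2k$ term in $g$ contributes $r^{2k}$, so after the $r^{d-1}/r^{d+1} = r^{d-2-d+ \ldots}$ — concretely the degree-2 term yields $\int_0^\infty\int_{\dsphere} H(\lambda r\theta) r^{2}\cdot r^{-2}\,(\theta\cdot\mu)^2\,r^{d-1-(d-1)}\ldots$; after the change of variables $s = \lambda r$ this produces the overall $1/\lambda$ for the mean term and $1/\lambda^3$ for the covariance/skew term, matching the stated $\int_0^\infty H(r\theta)\dr$ and $\int_0^\infty r^2 H(r\theta)\dr$ structure. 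The key estimate is that the remainder — coming from the order-$|\omega|^5$ and higher terms in the Taylor expansion of $g$, which by \ref{assumptions:leading-moments}(iii) are bounded by $|\omega|^5 B(\|\omega\|)$ — integrates against $H(\lambda\omega)\|\omega\|^{-(d+1)}$ to something $\lesssim \lambda^{-(4-b)}$, exactly as spelled out in the displayed computation in Assumption (iii). This is where I expect the main obstacle: I need a clean pointwise bound of the form $\bigl|g(\omega) - P(\omega)\bigr| \le C\|\omega\|^5\,\bigl(\|e^W\|_{C^5_{\|\omega\|}} + \|e^{-W}\|_{C^5_{\|\omega\|}}\bigr)$ where $P$ is the degree-$\le 4$ polynomial, via Taylor's theorem with integral remainder applied to the $C^5$ function $\omega\mapsto e^{W(\omega)} + e^{-W(\omega)}$; justifying that the constant is uniform in $\lambda$ (it is, since $B$ already absorbs all $\lambda$-dependence through $\Htilde(\omega)=H(\lambda\omega)$) and that the odd-degree pieces vanish requires care but is routine.

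Finally, for the spherically symmetric case (iv), I would evaluate the angular integrals $\int_{\dsphere}(\theta\cdot\mu)^2\,\dOmega(\theta)$, $\int_{\dsphere}(\theta\cdot\Delta\theta)^2\,\dOmega(\theta)$, $\int_{\dsphere}(\theta\cdot\mu)^4\,\dOmega(\theta)$, and $\int_{\dsphere}(\theta\cdot\mu)\sum\kappa_{ijk}\theta_i\theta_j\theta_k\,\dOmega(\theta)$ using standard formulas for moments of the uniform measure on $\dsphere$: $\int \theta_i\theta_j\,\dOmega = \tfrac{\Vol(\dsphere)}{d}\delta_{ij}$ and $\int\theta_i\theta_j\theta_k\theta_l\,\dOmega = \tfrac{\Vol(\dsphere)}{d(d+2)}(\delta_{ij}\delta_{kl}+\delta_{ik}\delta_{jl}+\delta_{il}\delta_{jk})$. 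The first gives $\tfrac{\Vol(\dsphere)}{d}\|\mu\|^2$. The fourth-moment formula applied to $(\theta\cdot\Delta\theta)^2 = \sum \Delta_{ij}\Delta_{kl}\theta_i\theta_j\theta_k\theta_l$ yields $\tfrac{\Vol(\dsphere)}{d(d+2)}\bigl(\trace(\Delta)^2 + 2\|\Delta\|_F^2\bigr)$ (using symmetry of $\Delta$), applied to $(\theta\cdot\mu)^4$ yields $\tfrac{3\Vol(\dsphere)}{d(d+2)}\|\mu\|^4$, and applied to the skew-mean cross term yields $\tfrac{\Vol(\dsphere)}{d(d+2)}\cdot 3\sum_{i,j}\kappa_{iij}\mu_j = \tfrac{3\Vol(\dsphere)}{d(d+2)}\beta\cdot\mu$ after contracting two of the four indices (the three Wick pairings collapse to the single pattern $\beta\cdot\mu$ by the full symmetry of $\kappa$ in its indices). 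Plugging these into the bracket $3(\theta\cdot\Delta\theta)^2 - (\theta\cdot\mu)^4 - 4(\theta\cdot\mu)\sum\kappa_{ijk}\theta_i\theta_j\theta_k$ and pulling the common factor $\tfrac{3\Vol(\dsphere)}{d(d+2)}$ against the $\tfrac{1}{12c_d}$ prefactor gives $\tfrac{\Vol(\dsphere)}{4c_d d(d+2)}$, producing exactly $2\|\Delta\|_F^2 + \trace(\Delta)^2 - \|\mu\|^4 - 4\beta\cdot\mu$ as claimed, with $h(r) = H(r\theta)$ and $\dr$, $r^2\dr$ weights as before.
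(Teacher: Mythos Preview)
Your overall architecture matches the paper's: expand the factor $|e^{W/2}-e^{-W/2}|^2$ in $\omega$ near zero, control the remainder via the $C^5$ bound in assumption~(iii), then evaluate the angular integrals under spherical symmetry. The spherical-integral paragraph is correct and is exactly Lemma~\ref{lemma:spherical-integrals} in the paper. The remainder argument you sketch is also what the paper does.

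There is, however, a genuine error in your expansion of $g(\omega)$. You write
\[
g(\omega)=|e^{W/2}-e^{-W/2}|^2=e^W+e^{-W}-2=2(\cosh W-1),
\]
but this identity is $(e^{W/2}-e^{-W/2})^2$, not its modulus squared. Since $W$ is genuinely complex (its leading term is $i\,\omega\cdot\mu$), the two differ. The correct object is
\[
g=(e^{W/2}-e^{-W/2})(e^{\Wbar/2}-e^{-\Wbar/2})
=2\cosh(\Real W)-2\cos(\mathrm{Im}\,W),
\]
and with $\Real W=-\tfrac12\,\omega\cdot\Delta\omega+O(|\omega|^4)$, $\mathrm{Im}\,W=\omega\cdot\mu-\tfrac16\sum\kappa_{ijk}\omega_i\omega_j\omega_k+O(|\omega|^5)$ this gives degree-$2$ term $+(\omega\cdot\mu)^2$ and degree-$4$ term $\tfrac14(\omega\cdot\Delta\omega)^2-\tfrac13(\omega\cdot\mu)\sum\kappa_{ijk}\omega_i\omega_j\omega_k-\tfrac1{12}(\omega\cdot\mu)^4$. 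Your version produces the wrong sign at degree~$2$ (you wrote $-(\omega\cdot\mu)^2$, which would make $g<0$ near the origin, impossible for a modulus squared) and flips the signs of the $\mu^4$ and $\mu\kappa$ pieces at degree~$4$; you noticed the $\mu^4$ sign did not match but did not locate the cause.

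The paper avoids this pitfall by never writing a closed formula for $g$; instead it differentiates $S\Sbar$ directly (Lemma~\ref{lemma:psik-values}), keeping the complex conjugate explicit via $\Sbar,\Wbar$, and then evaluates at $\omega=0$ using $S(0)=0$, $S'(0)=1$, $S''(0)=0$, $S'''(0)=1/4$. Either that route, or the $2\cosh(\Real W)-2\cos(\mathrm{Im}\,W)$ identity above, fixes your argument; once corrected, the rest of your proposal goes through.
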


Using the asymptotic relation, $\Gamma(x + \alpha)\sim \Gamma(x)x^\alpha$, for large $x$, we have
\begin{align*}
  \frac{\Vol\left( \dsphere \right)}{c_d} &\sim \sqrt{\frac{d}{2\pi}}.
\end{align*}
allowing the asymptotic form
\begin{corollary}
  Given (i) - (iv) of assumptions \ref{assumptions:leading-moments},
  \begin{align*}
    \D(X, Y)
    &\sim \frac{1}{\lambda}\frac{1}{\sqrt{d2\pi}}\|\mu\|^2 \int_0^\infty h(r)\dr
    \\
    &+ \frac{1}{\lambda^3}\frac{1}{4d}\frac{1}{\sqrt{d2\pi}}
    \left[
    2\|\Delta\|_F^2 + \trace\left( \Delta \right)^2 - \|\mu\|^4
    - 4\beta\cdot\mu
  \right]
  \int_0^\infty h(r)r^2\dr,
  \end{align*}
  where $\sim$ indicates equality up to higher order terms in $1/\lambda$ and $1/d$.
  \label{corollary:leading-moments-asymptotics}
\end{corollary}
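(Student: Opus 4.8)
The plan is to start from the Fourier representation \eqref{align:energy-dist-cumulants} and localize the integral. First I would split $\int = \int_{\|\omega\|\le r/\lambda} + \int_{\|\omega\|> r/\lambda}$ for a fixed $r$; assumption (iii) (together with the decay $\Htilde(\omega) = H(\lambda\omega)$) makes the tail harmless, so the whole contribution up to $O(\lambda^{-(4-b)})$ comes from a neighborhood of the origin where $W$ is small. On that neighborhood, substitute $\omega \mapsto \omega/\lambda$ to pull out the scaling: $\Htilde(\omega)\domega/\|\omega\|^{d+1} = H(\lambda\omega)\domega/\|\omega\|^{d+1}$ becomes $\lambda^{-0}$... more precisely, after the change of variables the $\lambda$-dependence is isolated in the factors multiplying powers of $W$, since $W(\omega/\lambda)$ is $O(1/\lambda)$ in the relevant region.

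Next I would Taylor-expand the integrand $\bigl|e^{W/2}-e^{-W/2}\bigr|^2$. Writing $W = P + iQ$ with $P,Q$ real, one has $|e^{W/2}-e^{-W/2}|^2 = 2\cosh(P) - 2\cos(Q)$, but it is cleaner to expand directly: $e^{W/2}-e^{-W/2} = W + \tfrac{W^3}{24} + \cdots$, so $|e^{W/2}-e^{-W/2}|^2 = |W|^2 + \Real(\overline{W}\cdot W^3/12) + \cdots = |W|^2 + \tfrac{1}{6}\Real(\overline{W}W^3) + O(|W|^6)$. Then insert the Taylor expansion of $W$ about $\omega = 0$ from the cumulant identities \eqref{align:cumulant-identities}: $W(\omega) = i\mu\cdot\omega - \tfrac12\omega^\top\Delta\omega - \tfrac{i}{6}\sum\kappa_{ijk}\omega_i\omega_j\omega_k + \cdots$. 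After the $\omega\mapsto\omega/\lambda$ rescaling, $\mu\cdot\omega$ contributes at order $1/\lambda$, and the $|W|^2$ term's leading piece is $(\mu\cdot\omega)^2/\lambda^2$; combined with the $\|\omega\|^{-(d+1)}$ weight and the radial integration this gives the $\|\mu\|^2$ term. Collecting all contributions that survive at orders $\lambda^{-1}$ and $\lambda^{-3}$: from $|W|^2$ one gets $(\mu\cdot\omega)^2$ at $\lambda^{-1}$ and the cross term $2\,\Real\bigl((i\mu\cdot\omega)\overline{(-\tfrac{i}{6}\kappa(\omega))}\bigr) = -\tfrac13(\mu\cdot\omega)\kappa(\omega)$ plus $\tfrac14(\omega^\top\Delta\omega)^2$ at $\lambda^{-3}$ (noting the covariance term is real so it only pairs with itself at this order); from the $\tfrac16\Real(\overline W W^3)$ term one gets $-\tfrac16(\mu\cdot\omega)^4$ at $\lambda^{-3}$. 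Matching constants with the $1/12$ and $1/(12 c_d)$ prefactors and bookkeeping the powers of $r$ (from $\|\omega\|$ in the monomials versus $\|\omega\|^{-(d+1)}$) yields the first displayed formula; the $O(\lambda^{-(4-b)})$ remainder is exactly the content of assumption (iii).

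For the spherically symmetric form under assumption (iv), I would use standard formulas for averages of polynomials over $\dsphere$. Writing $\theta$ uniform on $\dsphere$, $\int_{\dsphere}\theta_i\theta_j\dOmega = \tfrac{\Vol(\dsphere)}{d}\delta_{ij}$ gives $\int(\theta\cdot\mu)^2\dOmega = \tfrac{\Vol(\dsphere)}{d}\|\mu\|^2$. For the quartic averages I would invoke $\int_{\dsphere}\theta_i\theta_j\theta_k\theta_l\dOmega = \tfrac{\Vol(\dsphere)}{d(d+2)}(\delta_{ij}\delta_{kl}+\delta_{ik}\delta_{jl}+\delta_{il}\delta_{jk})$, which yields $\int(\theta^\top\Delta\theta)^2\dOmega = \tfrac{\Vol(\dsphere)}{d(d+2)}\bigl(2\|\Delta\|_F^2 + \trace(\Delta)^2\bigr)$ (using symmetry of $\Delta$), $\int(\theta\cdot\mu)^4\dOmega = \tfrac{3\Vol(\dsphere)}{d(d+2)}\|\mu\|^4$, and $\int(\theta\cdot\mu)\sum\kappa_{ijk}\theta_i\theta_j\theta_k\dOmega = \tfrac{3\Vol(\dsphere)}{d(d+2)}\sum_{i,j}\kappa_{iij}\mu_j = \tfrac{3\Vol(\dsphere)}{d(d+2)}\beta\cdot\mu$ (the factor $3$ coming from the three ways to contract, and the symmetry of $\kappa$ under index permutation). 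Substituting and canceling the $3$'s against the $1/12$ gives the stated $1/(4c_d d(d+2))$ coefficient. The corollary is then immediate from $\Vol(\dsphere)/c_d \sim \sqrt{d/(2\pi)}$ and $d+2\sim d$.

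The main obstacle I expect is the rigorous control of the remainder: one must show that replacing $W$ by its degree-$3$ Taylor polynomial, and $e^{W/2}-e^{-W/2}$ by its degree-$3$ expansion, incurs only $O(\lambda^{-(4-b)})$ error \emph{uniformly after integrating against} $H(\lambda\omega)\|\omega\|^{-(d+1)}\domega$. This is where assumption (iii) does the work — the $C^5_r$ bound on $e^{\pm W}$ controls the Taylor remainder of the full integrand by $\|\omega\|^5 B(\|\omega\|)$ (a fifth-order term, since the quartic in $W$ vanishes by parity considerations at the relevant order, or more honestly because the next genuinely new contribution is order $5$), and the displayed integral bound in (iii) is precisely calibrated so that this integrates to $\lesssim \lambda^{-(4-b)}$. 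Getting the exponent bookkeeping exactly right — tracking which monomials in $\omega$ pair with which, and verifying nothing at order $\lambda^{-2}$ or an unaccounted $\lambda^{-3}$ piece survives (e.g.\ the $\mu$–$\Delta$ cross term is purely imaginary times real and drops out of the modulus-squared at that order) — is the delicate part of the argument.
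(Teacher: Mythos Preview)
Your proposal is correct and follows essentially the same route as the paper: Taylor-expand $|e^{W/2}-e^{-W/2}|^2$ against the cumulant expansion of $W$, bound the remainder via assumption (iii), evaluate the spherical moments $\int_{\dsphere}\theta_i\theta_j\theta_k\theta_\ell\,\dOmega$, and finish with $\Vol(\dsphere)/c_d\sim\sqrt{d/(2\pi)}$. Two small remarks: your initial splitting at $\|\omega\|=r/\lambda$ is unnecessary, since (as you yourself note at the end) the $C^5_r$ bound already gives a global pointwise remainder $\lesssim\|\omega\|^5 B(\|\omega\|)$ that assumption (iii) integrates directly; and there is an arithmetic slip---$2\Real\bigl(\overline{W}\cdot W^3/24\bigr)=\tfrac{1}{12}\Real(\overline{W}W^3)$, not $\tfrac16$, so the $(\theta\cdot\mu)^4$ coefficient is $-\tfrac{1}{12}$, matching the paper's $\psi_4/4! = -2(\theta\cdot\mu)^4/24$.
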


\subsection{On satisfying assumptions~\ref{assumptions:leading-moments}}
\label{section:satisfying-assumptions}

The decay assumption (ii), $\Htilde(\omega)=H(\lambda\omega)$, roughly corresponds to $X$, $Y$ being much larger than their difference. To see this, rescale $X\mapsto\lambda X$ and we see the characteristic function changes from $F(\omega)$ to $F_\lambda(\omega)= F(\lambda\omega)$. Similarly suppose $Y$ has characteristic function $G(\lambda\omega)$. Then, referring to~\eqref{align:energy-dist-cumulants},
\begin{align*}
    \Htilde(\omega) &= |F(\lambda\omega)G(\lambda\omega)| =: H(\lambda\omega).
\end{align*}
On the other hand,
\begin{align*}
    \left|
    e^{W(\omega)/2} - e^{-W(\omega)/2}
    \right|
    &= 
    \left|
    \sqrt{\frac{F(\lambda\omega)}{G(\lambda\omega)}}
    -
    \sqrt{\frac{G(\lambda\omega)}{F(\lambda\omega)}}
    \right|,
\end{align*}
and if $F\approx G$, this term should be $\sim O(1)$.

Next consider condition (iii).
Since the derivatives of the characteristic function at zero are the moments, the $C^5_r$ bounds imply the first five moments of $X$ and $Y$ exist.
the growth condition $\leq B(r)$ must be checked case by case. For example, if $Y\sim\calN(0, \lambda^2)$ and $X\sim\calN(0, \lambda^2 + \delta^2)$, then 
\begin{align*}
   h(r) &= e^{-r^2(\lambda^2 + \delta^2/2)}, \\
  e^{-W(r)} &\leq e^{r^2\delta},
  \qbox{and}
  \|e^{-W}\|_{C^k_r} \lesssim (1 + r^k)e^{r^2\delta^2},
\end{align*}
which satisfies assumptions~\ref{assumptions:leading-moments} (iii) with $b=\delta^2/\lambda^2$.

Finally, since it is standard practice to whiten variables before hypothesis testing or optimization, the spherical assumption $(iv)$ will be reasonable in some cases.



\subsection{Interpretation and applications}
\label{section:leading-moments-interpretation}
Proposition~\ref{proposition:leading-moments} sheds light on the energy score in the perturbative regime (large $\lambda$).
First, notice that differences of the mean have leading influence due to the structure of the score, which can be written using Taylor series of cumulants, weighted by a decay of $\|\omega\|^{-(d + 1)}$. The order of $\lambda$ attached to each term of the series determines the relative influence, and is independent of dimension. Second, the influence of on versus off diagonal members of the covariance is determined by averaging. Specifically, spherical averages result in the term $2\|\Delta\|_F^2 + \trace(\Delta)^2$. This will involve dimension, but the story is nuanced as we will see below.

\subsubsection{Multivariate Gaussians and the M-dependent case revisited}
We can use the form in corollary~\ref{corollary:leading-moments-asymptotics} to obtain the leading terms when $Y\sim\calN(0, \lambda^2 I - \Delta/2)$, $X\sim\calN(\mu, \lambda^2 I + \Delta/2)$. Then
\begin{align*}
    \Htilde(\omega)
  = H(\lambda\omega)
  &= \exp\left\{ -\omega\cdot\left( \frac{2I}{2\lambda^2} \right)\omega \right\}
  = \exp\left\{ -\|\omega/\lambda\|^2 \right\}
\end{align*}
This means $ h(r) = \exp\left\{ -r^2 \right\}$ and then
\begin{align*}
  \int_0^\infty h(r)\dr = \frac{\sqrt{\pi}}{2},
  \qbox{and}
  \int_0^\infty h(r)r^2\dr = \frac{\sqrt{\pi}}{4}.
\end{align*}
Using the symmetry of $X$, $Y$ about their mean (which implies $\beta\equiv0$), and
corollary~\ref{corollary:leading-moments-asymptotics}, we have
\begin{align}
  \label{align:multivariate-normal-leading-moments}
  \begin{split}
    \D(X, Y)
    &\sim \frac{1}{\lambda}
    \frac{1}{(8d)^{1/2}}
    \|\mu\|^2
    + \frac{1}{\lambda^3}
    \frac{1}{8d}
    \frac{1}{(8d)^{1/2}}
    \left[ 2\|\Delta\|_F^2 + \trace\left( \Delta \right)^2 - \|\mu\|^4 \right]
  \end{split}
\end{align}

To plug in the M-dependent case of Section~\ref{section:m-dependent-sequences}, let us choose $\Delta$ to have $\delta^2$ on the diagonal and $\rho^2$ on the $M$ sub/super diagonals. Then, for large $d$,
\begin{align*}
  \|\mu\|^2 &= d \mu_1^2, \qbox{and} \|\mu\|^4 = d^2\mu_1^4, \\
  \trace\left( \Delta \right)^2 &= d^2\delta^4,
  \qbox{and}
  \|\Delta\|_F^2 \sim d\delta^4 + 2dM\rho^4,
\end{align*}
so that
\begin{align}
  \label{align:M-dependent-normal-leading-moments}
    \D(X, Y)
    &\sim
    \frac{1}{\lambda}\sqrt{\frac{d}{8}}\mu_1^2
    + \frac{1}{\lambda^3}\sqrt{\frac{d}{8}} \left[ 
      \frac{\delta^4}{8} - \frac{\mu_1^4}{8}
      + \frac{\delta^4}{4d} + \frac{M\rho^4}{2d}
    \right]
\end{align}
This more precise result~\eqref{align:M-dependent-normal-leading-moments} agrees with~\eqref{align:M-dependent-asymptotic-result}. In particular, we see that off-diagonal covariance contribution $M\rho^4/(2d)$ is $\sim 1/d$ smaller than the leading diagonal contribution $\delta^4/8$. This is due to $\trace(\Delta)^2$ giving an $O(d)$ weight to each individual diagonal element, whereas the $\|\Delta\|_F^2$ term gives a weight of one to all elements.

\subsubsection{Cosine similarity of gradients}
\label{section:cosine-similarity-of-gradients}
In the machine learning setting, $X$ is a forecast from a learned model, and $Y$ is ground truth. An ideal loss term to encourage correct covariance structure would be
\begin{align*}
  \calI :&= \|\Cov{X} - \Cov{Y}\|_F^2 = \|\Delta\|_F^2 = \sum_{n=1}^d \lambda_n^2,
\end{align*}
where $\lambda_n$ is the $n^{th}$ eigenvalue of the difference of covariance matrices $\Delta$. This weights every discrepancy in the covariance difference equally, and penalizes the spectrum in the L2 sense. However, when using $\D(X, Y)$ as a loss, the leading contribution from covariance is
\begin{align*}
  \calL :&= 2\|\Delta\|_F^2 + \trace(\Delta)^2
  = 2\sum_{n=1}^d\lambda_n^2 + \sum_{n,m=1}^d \lambda_n\lambda_m.
\end{align*}

Since both $\calI$ and $\calL$ have a minimum at $\Delta=0$, it's fair to ask how similar $\calL$ is to $\calI$.

Let $\xi$ be the model parameters, for example neural network weights. Typically $\xi$ is learned by a flavor of gradient descent. Since the $\lambda_n$ are a function of $\xi$, and so we can express the gradients as
\begin{align*}
  \frac{\p \calL(X, Y)}{\p\xi} &= \sum_{n=1}^d\frac{\p\lambda_n}{\p\xi}\frac{\p\calL}{\p\lambda_n},
  \qbox{and}
  \frac{\p \calI(X, Y)}{\p\xi} = \sum_{n=1}^d\frac{\p\lambda_n}{\p\xi}\frac{\p\calI}{\p\lambda_n}.
\end{align*}
We are thus motivated to study the cosine similarity of the gradients $\nabla_\lambda\calL$ vs. $\nabla_\lambda\calI$. We have
\begin{align}
  \begin{split}
  \calS :&= \frac{\nabla_\lambda\calL\cdot\nabla_\lambda\calI}{\left\|\nabla_\lambda\calL\right\|\,\left\|\nabla_\lambda\calI\right\|}
  = \frac{2 + \gamma^2}{\sqrt{4 + \gamma^2(4 + d)}}.
  \end{split}
  \label{align:cosine-similarity-of-grads}
\end{align}
where $\gamma^2 := \trace(\Delta)^2 / \|\Delta\|_F^2\in[0, d]$ is a measure of diagonal importance.


Consider the case of local correlations of size $M$, inside a large system of size $d \gg M\gg 1$ (as in Section~\ref{section:m-dependent-sequences}). This means, $\|\Delta\|_F^2\sim Md$. The diagonal of $\Delta$ will have positive or negative elements if $X$ systematically over/under represents variance. In this case, $\trace(\Delta)^2\sim d^2$, so $\gamma^2\sim d/M$, and then $\calS \sim 1 / \sqrt{M}$.
On the other hand, if the diagonal is unbiased, we can expect $\trace(\Delta)^2\sim d$, so $\gamma^2\sim 1 / M$, and then $\calS\sim\sqrt{M/d}$.
The effect on learning should be that the componentwise scales of $X$ become equal to those of $Y$ fairly quickly. Once that happens, the gradients of $\calL$ become close to orthogonal to those of $\calI$, and the off-diagonal correlations may be neglected.

On the other hand, consider a high resolution system $d\gg1$, and large scale correlation length that is a constant multiple of the system size. That is, $\|\Delta\|_F^2\sim \rho d^2$, for $\rho\in(0, 1)$. If the diagonal is biased, $\gamma^2\sim 1 /\rho$, and then $\calS\sim \sqrt{1/d}$. If on the other hand the diagonal of $\Delta$ is unbiased, $\trace(\Delta)^2\sim d$, so $\gamma^2\sim 1 / (\rho d)$, and then $\calS\sim O(1)$.
As a result, we expect scale bias to distract from learning large scale correlations with energy distance. Once the scale is unbiased, the learning efficiency will be much better.

These four scenarios are summarized in table~\ref{tab:cosine-distance-cases}.

\begin{table}[htbp] 
    \captionsetup{width=0.9\linewidth}
    \centering 
    \begin{tabular}{l|c|c}
        \toprule
        & \textbf{\parbox{4.5cm}{\centering Local correlations $\|\Delta\|_F^2\sim Md$}}
        & \textbf{\parbox{4.5cm}{\centering Global correlations $\|\Delta\|_F^2\sim\rho d^2$}} \\
        \midrule
        \textbf{\parbox{4.5cm}{\centering Biased scales\\$\trace(\Delta)^2\sim d^2$}}
        & $\calS\sim\sqrt{1/M}$
        & $\calS\sim\sqrt{1/d}$ \\
        \midrule
        \textbf{\parbox{4.5cm}{\centering Unbiased scales\\$\trace(\Delta)^2\sim d$}}
        & $\calS\sim\sqrt{M/d}$
        & $\calS\sim1$ \\
        \bottomrule
    \end{tabular}
    \caption{
        \textbf{Asymptotic similarity} implied by~\eqref{align:cosine-similarity-of-grads} as dimension $d\to\infty$.
        The local correlation length $M\gg1$, or, alternatively the global correlation fraction $\rho\in(0,1)$, are both fixed.
        A machine learning practitioner can use the table by choosing a situation (local vs.~global correlations), then assume scales start off biased, and move towards unbiased as training progresses.
        Smaller $\calS$ means gradient descent corrects off-diagonal correlations slower.
    }
    \label{tab:cosine-distance-cases} 
\end{table}

\subsection{Proof of proposition~\ref{proposition:leading-moments}}
\label{section:leading-moments-proof}

Since $F(\omega) = \Exp{e^{i\omega\cdot X}}$, we are tempted to expand $F$, $G$ in power series inside the Fourier representation \eqref{align:energy-dist-fourier}. Doing this directly does not work, since every term would diverge. For that reason, we rewrite the distance as~\eqref{align:energy-dist-cumulants}, so $\Htilde$ can be used as an integrating factor. We then apply the cumulant identities \eqref{align:cumulant-identities}.

The Taylor expansion
\begin{align}
  \label{align:taylor-expansion-in-proof}
  \begin{split}
    \D(X, Y)
    &=\frac{1}{c_d}\sum_{k=1}^4\int_0^\infty \frac{r^{k-2}}{k!\lambda^{k-1}}\int_{\dsphere}H(r\theta) \psi_k(\theta)\dOmega(\theta)\dr
    + R(\lambda), \\
    |R(\lambda)| &\lesssim \frac{1}{\lambda^{4-b}},
  \end{split}
\end{align}
immediately follows from assumptions (ii), (iii). What remains is to identify the terms $\psi_k$, and integrate them under the spherical symmetry condition.

Readers take note that, if we were only interested in spherical symmetry, a simpler proof is available using Pizetti's theorem~\cite{Li2011-dz}.

\begin{lemma}
  Under assumptions~\ref{assumptions:leading-moments} (i)-(iii),
  \label{lemma:psik-values}
  \begin{align*}
    \psi_1(\theta) &= \psi_3(\theta) \equiv 0, \\
    \psi_2(\theta) &= 2 (\theta\cdot\mu)^2, \\
    \psi_4(\theta) &= 6 (\theta\cdot\Delta\theta)^2 - 2(\theta\cdot\mu)^4
    -8
    (\theta\cdot\mu)
     \sum_{ijk=1}^d \kappa_{ijk}\theta_i\theta_j\theta_k
    .
  \end{align*}
\end{lemma}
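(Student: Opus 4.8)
The plan is to compute each $\psi_k$ directly from its definition as a radial derivative at the origin of the integrand factor
\[
  \Phi(\omega):=\left|e^{W(\omega)/2}-e^{-W(\omega)/2}\right|^2,
\]
namely $\psi_k(\theta)=\tfrac{d^k}{dt^k}\Phi(t\theta)\big|_{t=0}$, which for $k\le4$ equals $k!$ times the degree-$k$ homogeneous part of the fourth-order Taylor polynomial of $\Phi$ at $0$. Assumptions (ii)--(iii) already guarantee, as the text notes, that $\Phi$ is $C^5$ near the origin, so this Taylor polynomial and the remainder estimate in~\eqref{align:taylor-expansion-in-proof} are legitimate; the content of the lemma is only to pin down the coefficients.

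First I would rewrite $\Phi$ in a form amenable to expansion. Expanding the complex modulus,
\begin{align*}
  \Phi(\omega)
  &= \bigl(e^{W/2}-e^{-W/2}\bigr)\bigl(e^{\overline W/2}-e^{-\overline W/2}\bigr) \\
  &= 2\cosh\bigl(\Real W(\omega)\bigr)-2\cos\bigl(\operatorname{Im} W(\omega)\bigr),
\end{align*}
using $(W+\overline W)/2=\Real W$ and $(W-\overline W)/2=i\operatorname{Im}W$. Next I would record the low-order structure of $W$ at $0$ from the cumulant identities~\eqref{align:cumulant-identities}: the degree-$1$ part of $W$ is $i\,\mu\cdot\omega$, the degree-$2$ part is $-\tfrac12\,\omega\cdot\Delta\omega$, and the degree-$3$ part is $-\tfrac{i}{6}\sum_{ijk=1}^d\kappa_{ijk}\omega_i\omega_j\omega_k$. (More generally the degree-$n$ part of $W$ is $i^n$ times a real homogeneous polynomial, since cumulants of real-valued random vectors are real; this already shows $\Real W$ carries only even-degree terms and $\operatorname{Im}W$ only odd-degree terms, hence $\Phi$ is even near $0$ and $\psi_1=\psi_3=0$ — but only the degrees $\le3$ above are needed below.) Thus $\Real W=-\tfrac12\,\omega\cdot\Delta\omega+O(|\omega|^4)$ starts at degree $2$, and $\operatorname{Im}W=\mu\cdot\omega-\tfrac16\sum_{ijk=1}^d\kappa_{ijk}\omega_i\omega_j\omega_k+O(|\omega|^5)$ has no degree-$2$ term.

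Then I would substitute into $2\cosh(x)=2+x^2+\tfrac1{12}x^4+\cdots$ and $2\cos(y)=2-y^2+\tfrac1{12}y^4-\cdots$ and collect homogeneous parts through degree $4$. Since $\Real W=O(|\omega|^2)$, the $\cosh$ term contributes only at degrees $0$ and $\ge4$, with degree-$4$ part $\bigl(-\tfrac12\omega\cdot\Delta\omega\bigr)^2=\tfrac14(\omega\cdot\Delta\omega)^2$; the quartic (fourth-cumulant) part of $\Real W$ first reaches degree $6$, which is why no fourth moment enters. The $\cos$ term contributes a degree-$2$ part $(\mu\cdot\omega)^2$ and a degree-$4$ part built from the cross term $2(\mu\cdot\omega)\bigl(-\tfrac16\sum_{ijk}\kappa_{ijk}\omega_i\omega_j\omega_k\bigr)$ in $y^2$ together with $\tfrac1{12}(\mu\cdot\omega)^4$ from $y^4$, while its degree-$1$ and degree-$3$ parts vanish because $\operatorname{Im}W$ has no constant or degree-$2$ term. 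Collecting, the degree-$2$ part of $\Phi$ is $(\mu\cdot\omega)^2$ and its degree-$4$ part is $\tfrac14(\omega\cdot\Delta\omega)^2-\tfrac13(\mu\cdot\omega)\sum_{ijk}\kappa_{ijk}\omega_i\omega_j\omega_k-\tfrac1{12}(\mu\cdot\omega)^4$; multiplying by $2!$ and $4!$ and setting $\omega=\theta$ gives $\psi_2(\theta)=2(\theta\cdot\mu)^2$ and $\psi_4(\theta)=6(\theta\cdot\Delta\theta)^2-8(\theta\cdot\mu)\sum_{ijk}\kappa_{ijk}\theta_i\theta_j\theta_k-2(\theta\cdot\mu)^4$, and $\psi_1=\psi_3\equiv0$ from the vanishing odd-degree parts.

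The computation is essentially mechanical, so I do not anticipate a genuine obstacle; the points to handle carefully are (a) the identity $|e^{W/2}-e^{-W/2}|^2=2\cosh\Real W-2\cos\operatorname{Im}W$; (b) tracking which Taylor coefficients of $\cosh$ and $\cos$ feed each homogeneous degree, in particular verifying that the fourth-cumulant term of $W$ does not reach degree $4$ of $\Phi$ (so no fourth moment appears in $\psi_4$); and (c) the factor $k!$ relating the degree-$k$ homogeneous part of the Taylor polynomial to the radial derivative $\psi_k$.
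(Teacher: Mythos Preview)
Your proof is correct and arguably cleaner than the paper's. The paper does not use the identity $|e^{W/2}-e^{-W/2}|^2 = 2\cosh(\Real W)-2\cos(\operatorname{Im}W)$; instead it sets $S=e^{W/2}-e^{-W/2}$ and computes the mixed partials $\p_i(S\Sbar)$, $\p_{ij}(S\Sbar)$, $\p_{ijk}(S\Sbar)$, $\p_{ijk\ell}(S\Sbar)$ at $\omega=0$ by repeated product rule, then contracts with $\theta_i\theta_j\cdots$ and applies the cumulant identities. Your route makes the parity structure ($\psi_1=\psi_3=0$) and the absence of the fourth cumulant in $\psi_4$ transparent from the outset, since $\Real W$ and $\operatorname{Im}W$ split cleanly into even and odd degrees; the paper's approach recovers these facts only after writing out and canceling many cross-terms. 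The trade-off is that the paper's computation is purely mechanical product-rule bookkeeping requiring no auxiliary identity, whereas yours hinges on first spotting and verifying the $\cosh/\cos$ formula for the modulus. Both land on the same homogeneous coefficients, and your identification of $\psi_k$ with $k!$ times the degree-$k$ part of the Taylor polynomial of $\Phi$ matches the expansion~\eqref{align:taylor-expansion-in-proof}.
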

\begin{proof}[Proof of lemma \ref{lemma:psik-values}]
  Put
  \begin{align*}
    S &= e^{W/2} - e^{-W/2},
    \quad S' = \frac{\diff}{\dW}S,
    \quad S''=\frac{\diff^2}{\dW^2}S,
    \qbox{and so on \ldots}.
  \end{align*}
  Let $\Wbar$, $\Sbar$ denote the complex conjugate of $S$, $W$.
  Also define differential operators
  \begin{align*}
    \p_i :&= \frac{\p}{\p\omega_i},
    \quad
    \p_{ij} := \frac{\p^2}{\p\omega_i\p\omega_j},
    \qbox{and so on \ldots} \\
    W_i &= \p_iW,
    \quad
    W_{ij} = \p_{ij}W,
    \qbox{and so on \ldots}
  \end{align*}
With this notation, we will have
\begin{align*}
    \psi_1(\theta) &= \sum_i \theta_i \p_i S\Sbar\big\vert_{\omega=0},
    \quad
    \psi_2(\theta) = \sum_{i,j}\theta_i\theta_j\p_{ij}S\Sbar\big\vert_{\omega=0},
    \\
    \psi_3(\theta) &= \sum_{i,j,k}\theta_i\theta_j\theta_k\p_{ijk}S\Sbar\big\vert_{\omega=0},
    \quad
    \psi_4(\theta) = \sum_{i,j,k,\ell}\theta_i\theta_j\theta_k\theta_\ell\p_{ijk\ell}S\Sbar\big\vert_{\omega=0}.
\end{align*}

Starting at $\psi_1$, we see
\begin{align*}
  \p_iS\Sbar &= S'\Sbar W_i + S\Sbar'\Wbar_i
  = 2 \Real\left\{ S'\Sbar W_i \right\}.
\end{align*}
Since $S(0)=0$, $\psi_1\equiv0$.

As for $\psi_2$,
\begin{align*}
  \p_{ij}S\Sbar &= 2\Real\left\{ 
    S''\Sbar W_iW_j
  + S'\Sbar'W_i\Wbar_j + S'\Sbar W_{ij}
\right\}.
\end{align*}
Now use $S(0)=\Sbar(0)=0$, $S'(0)=\Sbar'(0)=1$, and \eqref{align:cumulant-identities} to see
\begin{align*}
  \psi_2(\theta) &=
  2 \Real\left\{
    \sum_{ij} (i\mu_i\theta_i)(-i\mu_j\theta_j) 
  \right\}
  = 2(\mu\cdot\theta)^2.
\end{align*}

Proceeding to $\psi_3$,
\begin{align*}
  \p_{ijk}S\Sbar &= 2\Real\left\{ 
       S'''\Sbar W_iW_jW_k
       + S''\Sbar'W_iW_j\Wbar_k
       + S''\Sbar W_{ik}W_j
       + S''\Sbar W_iW_{jk}
  \right. \\
  &\qquad 
  + S''\Sbar'W_i\Wbar_jW_k
  + S'\Sbar'' W_i\Wbar_j\Wbar_k
  + S'\Sbar' W_{ik}\Wbar_j 
  + S'\Sbar' W_i\Wbar_{jk}
  \\
  &\qquad
  + S''\Sbar W_{ij}W_k
  + S'\Sbar' W_{ij}\Wbar_k
  + S'\Sbar W_{ijk}
  \left.
  \right\}
\end{align*}
Now using $S''(0)=0$, $S'''(0) = 1/4$, all but terms like $S'\Sbar' W_{ij}\Wbar_k$ drop out. Taking the real part, these drop out as well, and $\psi_3\equiv0$.

As for $\psi_4$, we will write down only terms that involve odd powers of $S$, since the even ones will become zero. This leads to
\begin{align*}
  \p_{ijk\ell}S\Sbar \bigg\vert_{\omega=0} &= 2\Real\left\{ 
    \frac{1}{4}\left( 
      W_iW_jW_k\Wbar_\ell
      + W_iW_j\Wbar_kW_\ell
      + W_i\Wbar_jW_kW_\ell
      + W_i\Wbar_j\Wbar_k\Wbar_\ell
    \right)
  \right.
  \\
  &\qquad
  + W_{ik\ell} \Wbar_j + W_{ik}\Wbar_{j\ell}
  + W_{i\ell} \Wbar_{jk} + W_i \Wbar_{jk\ell}
  + W_{ij\ell}\Wbar_k + W_{ij} \Wbar_{k\ell}
  \\
  &\qquad\left.
  + W_{ijk}\Wbar_\ell
  \right\}
\end{align*}
After applying the cumulant identities \eqref{align:cumulant-identities}, and exchanging dummy indices, we have the desired result.
\end{proof}

\begin{lemma}[Spherical integrals]
  \begin{align*}
    \int_\dsphere (\theta\cdot\mu)^2\dOmega(\theta) &= \|\mu\|^2 \frac{\Vol(\dsphere)}{d}, \\
    \int_\dsphere (\theta\cdot\mu)^4\dOmega(\theta) &= 3 \|\mu\|^4 \frac{\Vol(\dsphere)}{d(d+2)}, \\
    \int_\dsphere (\theta\cdot\Delta\theta)^2\dOmega(\theta) &= \left[2\|\Delta\|_F^2 + \trace(\Delta)^2\right] \frac{\Vol(\dsphere)}{d(d+2)}, \\
    \int_\dsphere
    (\theta\cdot\mu)
     \sum_{ijk=1}^d \kappa_{ijk}\theta_i\theta_j\theta_k
    \dOmega(\theta)
    &= 
    \left[ 3\sum_{i,j=1}^d \kappa_{iij}\mu_j \right]\frac{\Vol(\dsphere)}{d(d+2)}.
  \end{align*}
  \label{lemma:spherical-integrals}
\end{lemma}
\begin{proof}[Proof of lemma~\ref{lemma:spherical-integrals}]
  We show how to reduce the final term, since this is the most difficult. Here we will use that $\kappa_{\sigma(i)\sigma(j)\sigma(k)}$ is invariant under any permutation $\sigma$.
  First start with
  \begin{align*}
    \int_\dsphere 
    \left( \sum_{ijk}\theta_i\theta_j\kappa_{ijk}\theta_k \right)
    \left( \sum_\ell\theta_\ell\mu_\ell \right) \dOmega(\theta),
    &=
    \sum_{ijk\ell}\kappa_{ijk}\mu_\ell\int_\dsphere \theta_i\theta_j\theta_k\theta_\ell\dOmega(\theta).
  \end{align*}
  By symmetry, the only summands that survive are ones where each index is paired with (at least) one other. This gives
  \begin{align}
    \label{align:spherical-integrals-expanded-term}
    \sum_{\substack{i=1}}^d\kappa_{iii}\mu_i\int_\dsphere \theta_i^4\dOmega(\theta) +
    3\sum_{\substack{i,j=1\\i\neq j}}^d\kappa_{iij}\mu_j\int_\dsphere \theta_i^2\theta_j^2\dOmega(\theta).
  \end{align}

  Integrals of polynomials over a sphere are given in~\cite{Folland2001-uq}. In general
  \begin{align*}
    \int_\dsphere \theta_1^{a_1}\cdots\theta_d^{a_d}\dOmega(\theta)
    &= 2\frac{\Gamma\left( \frac{a_1 + 1}{2} \right)\cdots \Gamma(\frac{a_d+1}{2})}{\Gamma(\frac{a_1+1}{2} + \cdots + \frac{a_d+1}{2})}.
  \end{align*}
  This gives
  \begin{align*}
    \int_\dsphere\theta_1^4\dOmega(\theta) &= 3\frac{\Vol(\dsphere)}{d(d+2)},
    \qquad
    \int_\dsphere\theta_1^2\theta_2^2\dOmega(\theta) = \frac{\Vol(\dsphere)}{d(d+2)}.
  \end{align*}
  Inserting into \eqref{align:spherical-integrals-expanded-term} gives the final equality of the lemma. The others are similar.
\end{proof}

\begin{proof}[Proof of proposition~\ref{proposition:leading-moments}]
  Insert the results of lemmas~\ref{lemma:psik-values} and \ref{lemma:spherical-integrals} into \eqref{align:taylor-expansion-in-proof}.
\end{proof}

\section{Numerical verification}
\label{section:numerical-verification}

We verify proposition~\ref{proposition:leading-moments} in the case where $Y$ is a centered member of a location-scale family, and $X$ is a perturbation.
We test how well $\D$ values are determined by corollary~\ref{corollary:leading-moments-asymptotics}, which does assume spherical symmetry.
Note that $H$ will \emph{not} have perfect spherical symmetry, even in the Gaussian case.
That is, we find best fit coefficients $\alpha_1,\alpha_2$ so that
\begin{align}
    \label{align:distance-regression}
  \D(X, Y)
  &\approx \alpha_1 \|\mu\|^2
  + \alpha_2 \left[ 2\|\Delta\|_F^2 + \trace\left( \Delta \right)^2 - \|\mu\|^4 - 4\beta\cdot\mu \right].
\end{align}
We call this our \emph{$\D$ regression}.
For each distribution type, we perform a regression and measure the coefficient of determination $R^2$. In most cases, $R^2\approx1$, which indicates corollary~\ref{corollary:leading-moments-asymptotics} described $\D$ quite well.

In the Gaussian case, $\alpha_k$ can be computed by hand as~\ref{align:multivariate-normal-leading-moments}. So rather than a regression we use~\eqref{align:multivariate-normal-leading-moments} directly, as a \emph{$\D$ estimation}.

Our sweep considers various transformations of multivariate Gaussians. That is, we set
\begin{align*}
    X\sim T_X(Z),\quad Z\sim\calN(\mu, C),
    \qbox{and}
    Y\sim T_Y(Z'),\quad Z'\sim\calN(0, I_d).
\end{align*}
$T_X, T_Y$ are transformations generating different distribution classes. Our theory indicates that $\alpha_1,\alpha_2$ should depend (mostly) on the transformation and dimension. So, for every $(d,T_X, T_Y)$ triplet, we fit one $\alpha_1,\alpha_2$ regression pair.

The transformations and their label are as follows. We set $T_X=T_Y$ to the identity, and use the label \texttt{Gaussian}. We expect~\eqref{align:multivariate-normal-leading-moments} to hold here quite well. For $\nu\in\left\{ 2,3,5 \right\}$, we make our \texttt{MultivariateT(dof=$\nu$)} with $T_X=T_Y=T$, and $T(z) = z/\sqrt{\chi^2_\nu/\nu}$. Note that this distribution has finite moments $k$ only for $k<\nu$. For $\sigma\in\left\{ 0.75, 1, 1.25 \right\}$, we make \texttt{Exp($\sigma$)} with $T_X=T_Y=T$, and $T(z) = \exp\left\{ z\sigma \right\}$. This provides a skew in both $X$ and $Y$. To test skew in the difference, we set $T_Y$ to the identity, and use a $X\mapsto$\texttt{SinhArcsinh(X, skew)} transformation for \texttt{skew}$\in\left\{ 0.05, 0.1, 0.2 \right\}$. This transformation is available in the \texttt{TensorFlow Probability} library~\cite{Dillon2017-zq}.
\begin{figure}[htbp]
  \centering 
  \includegraphics[width=0.9\textwidth]{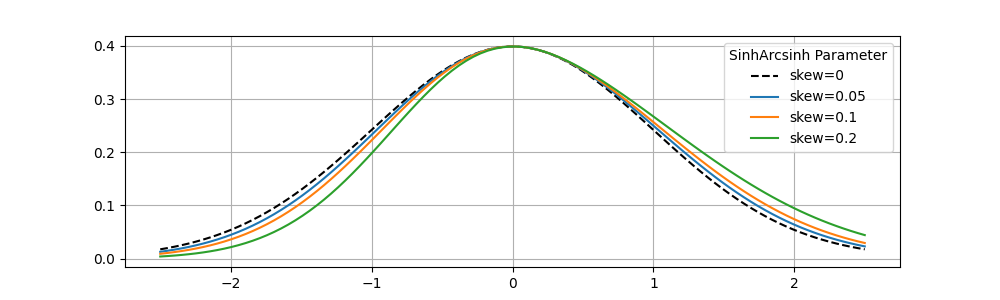}
  \caption{\textbf{Sweep of skewness parameter:} Here we show the probability density resulting from transforming a unit Normal by \texttt{SinhArcsinh(skew)}, for \texttt{skew}$\in\left\{ 0, 0.05, 0.1, 0.2 \right\}$. When \texttt{skew}=0, the transformation is the identity, so the density is the same as the unit Normal. As \texttt{skew} increases, the mass is tilted to the right.}
  \label{fig:sinh-arcsinh-skewness}
\end{figure}

For each of the ten transformations above, we generate results in dimension $d\in\left\{ 16,32,64 \right\}$. For each of these thirty combinations, we use $\mu = \left( \mu_1,\ldots,\mu_1 \right)\in\Rd$, with three diffrent values of $\mu_1$. For each of these ninety combinations, we generate twenty eight correlation matrices as different random rescaling of either Wishart or smooth exponential decays.

Results for Gaussian $X, Y$ are shown with small perturbations in figure~\ref{fig:normal-values-small-pert}, and larger perturbations in figure~\ref{fig:normal-values-large-pert}.
This case is different since we compute the regression coefficients directly from~\eqref{align:multivariate-normal-leading-moments}.
In the figures, the top row in each includes all three $\mu_1$ values.
We see three different clusters corresponding to different $\mu_1$ values, and as theory would predict, since the effect of different $\mu_1$ dominates the score. 
At this a large scale, the fit is quite good, with $R^2\approx1$. The bottom row considers one fixed $\mu_1$ cluster. Here we see that smaller dimension distances are offset from the theoretical estimate.
With larger scaling factor $\sigma$, the \texttt{Exp($\sigma$)} starts deviating more from theory.

Figure~\ref{fig:all-mu-values} shows non-Gaussian results at $d=64$, and every $\mu_1$ value. At this scale, the fit is quite good. 
Figure~\ref{fig:one-mu-values} shows non-Gaussian results with $\mu_1=0.15$ only. This corresponds to the middle cluster in figure~\ref{fig:all-mu-values}.
The \texttt{MultivariateT} covariance does not follow theory once the degrees of freedom $\nu=2$. This is expected, since the second moment becomes infinite.
The \texttt{SinhArcsinh} covariance deviates strongly from spherical covariance theory for larger skews as well. This is expected, since while corollary~\ref{corollary:leading-moments-asymptotics} allowed for skewed differences in moments, it still requires the sum of measures to be spherically symmetric.

\begin{figure}
  \centering
  \includegraphics[width=0.9\textwidth]{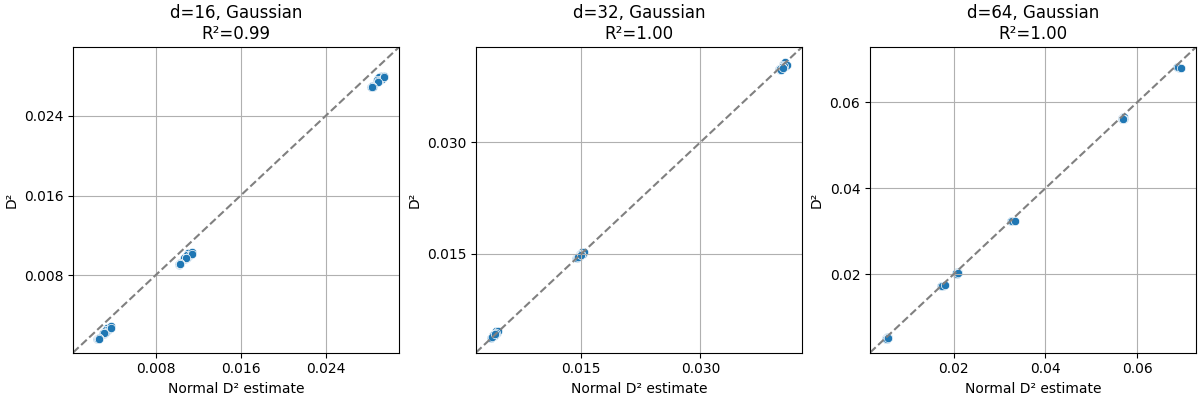}
  \includegraphics[width=0.9\textwidth]{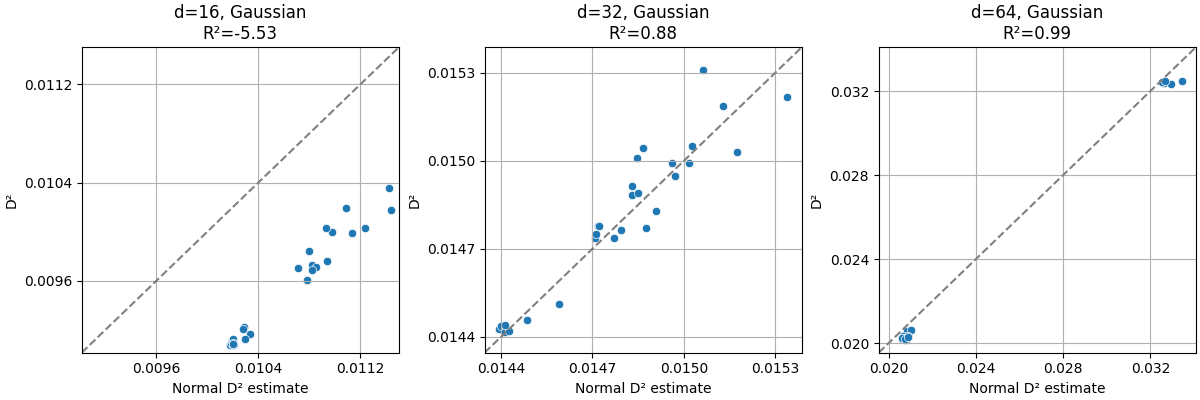}
  \caption{\textbf{Gaussian distributions with small perturbations:} 
      Here $Y\sim\calN(0, I_d)$, and $X\sim\calN(\mu, C)$ is a small perturbation of $Y$. We compare sample $\D(X, Y)$ with the theoretical estimate of~\eqref{align:multivariate-normal-leading-moments}, for $d=16, 32, 64$. \textbf{Top:} Different values of $\mu_1$ lead to different clusters. \textbf{Bottom:} Fixing $\mu_1=0.06$, we see the effect of different covariance only.
  }
  \label{fig:normal-values-small-pert}
\end{figure}
\begin{figure}
  \centering
  \includegraphics[width=0.9\textwidth]{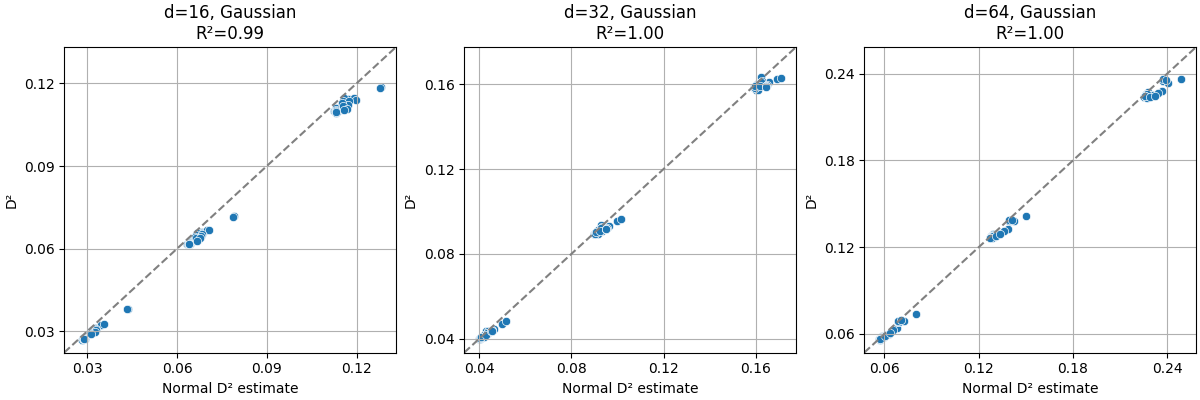}
  \includegraphics[width=0.9\textwidth]{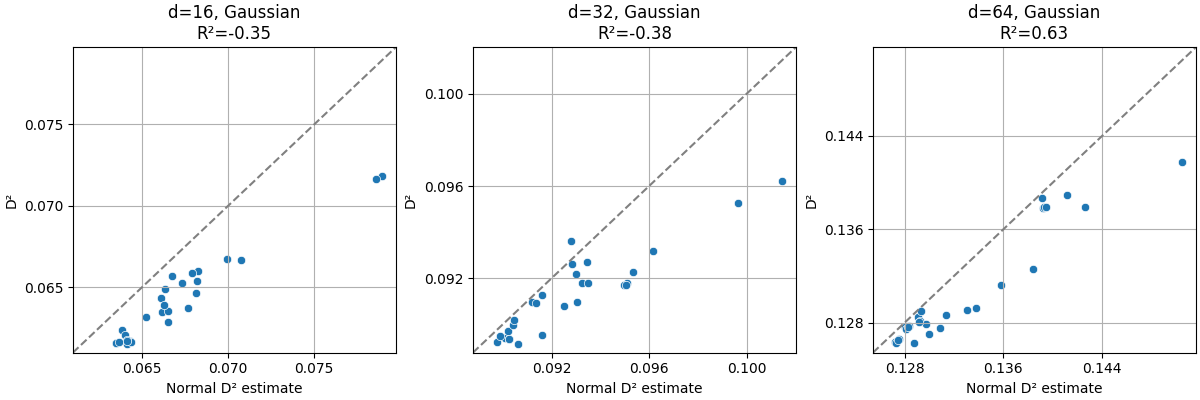}
  \caption{\textbf{Gaussian distributions with larger perturbations:} 
      Here $Y\sim\calN(0, I_d)$, and $X\sim\calN(\mu, C)$ is a larger perturbation of $Y$. We compare sample $\D(X, Y)$ with the theoretical estimate of~\eqref{align:multivariate-normal-leading-moments}, for $d=16, 32, 64$. \textbf{Top:} Different values of $\mu_1$ lead to different clusters. \textbf{Bottom:} Fixing $\mu_1=0.15$, we see the effect of different covariance only. $R^2$ values are lower here than when perturbations were small (figure~\ref{fig:normal-values-small-pert}).
  }
  \label{fig:normal-values-large-pert}
\end{figure}

\begin{figure}
  \centering 
  \includegraphics[width=0.9\textwidth]{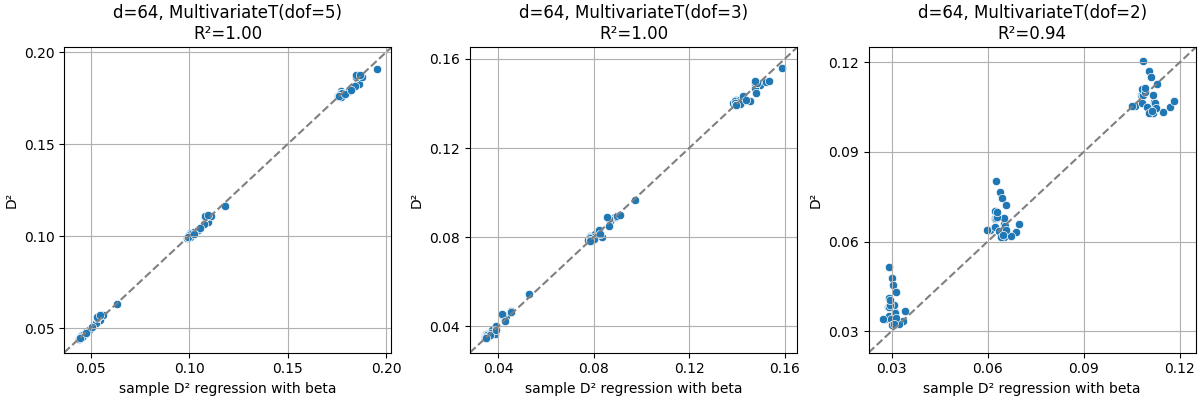}
  \includegraphics[width=0.9\textwidth]{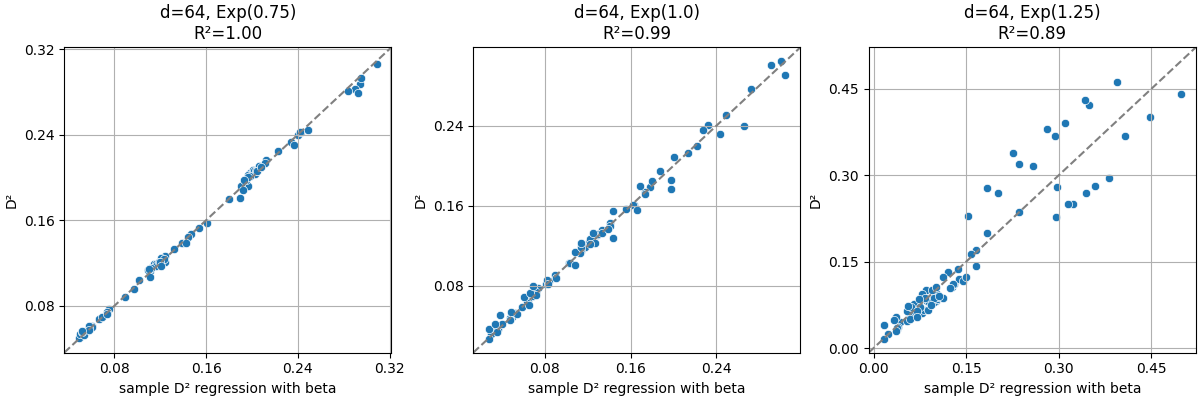}
  \includegraphics[width=0.9\textwidth]{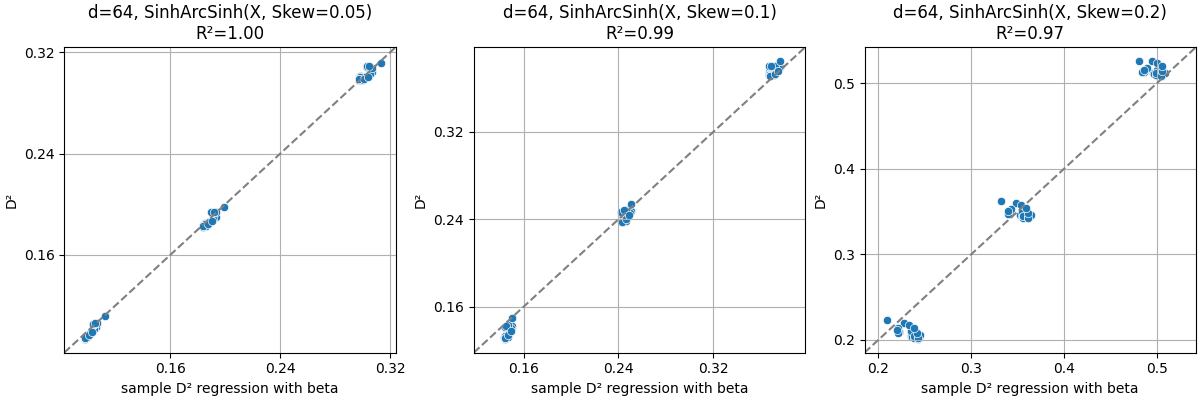}
  \caption{
  \textbf{Non-Gaussian distributions with larger perturbations:} Here $X$ and $Y$ are transformed Gaussians. Only $d=64$ is shown.
  We compare sample $\D(X, Y)$ with the best fit regression as in~\eqref{align:distance-regression}.
  }
  \label{fig:all-mu-values}
\end{figure}

\begin{figure}
  \centering 
  \includegraphics[width=0.9\textwidth]{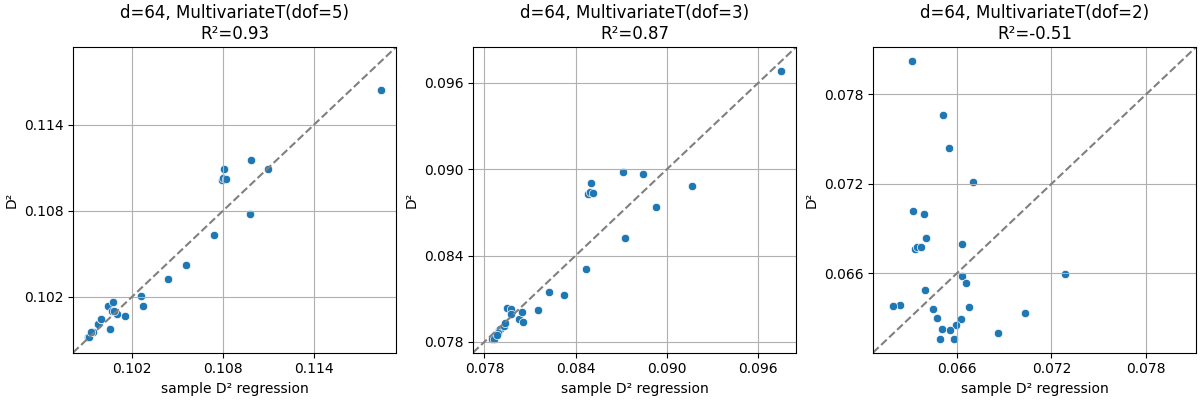}
  \includegraphics[width=0.9\textwidth]{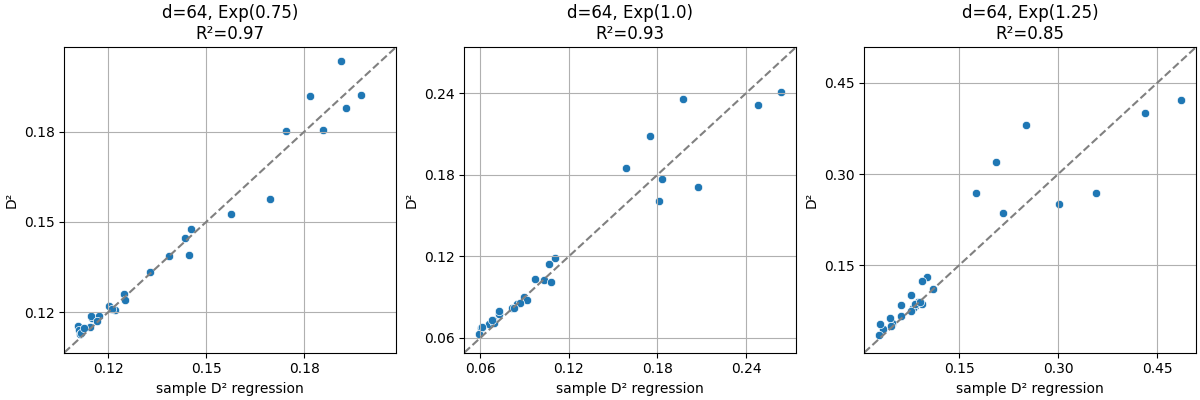}
  \includegraphics[width=0.9\textwidth]{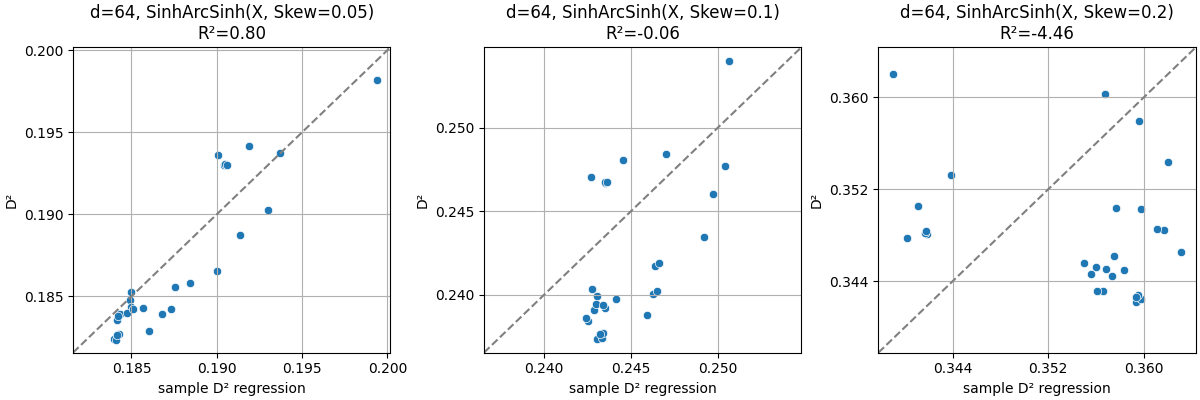}
  \caption{
      \textbf{Non-Gaussian distributions with larger perturbations, fixed $\mu_1$:} Here $X$ and $Y$ are transformed Gaussians. Only $d=64$ is shown. We only show $\mu_1=0.023$ so the effect of covariance can be isolated.
  We compare sample $\D(X, Y)$ with the best fit regression as in~\eqref{align:distance-regression}.
  }
  \label{fig:one-mu-values}
\end{figure}

\section*{Acknowledgments} The author would like to acknowledge conversations with Dmitrii Kochkov, Stephan Hoyer, and Janni Yuval, which motivated this work.

\bibliographystyle{plain}
\bibliography{energy-asymptotics}

\end{document}